\PassOptionsToPackage{table}{xcolor}
\documentclass[sigconf]{acmart}

\AtBeginDocument{}

\copyrightyear{2026}
\acmYear{2026}
\setcopyright{cc}
\setcctype{by}
\acmConference[CIKM '26]{Proceedings of the 35th ACM International Conference on Information and Knowledge Management}{November 07--11, 2026}{Rome, Italy}
\acmBooktitle{Proceedings of the 35th ACM International Conference on Information and Knowledge Management (CIKM '26), November 07--11, 2026, Rome, Italy}
\acmDOI{10.1145/3799682.3841143}
\acmISBN{979-8-4007-2539-5/2026/11}

\DeclareUnicodeCharacter{2217}{\textasteriskcentered}

\usepackage{amsthm}
\usepackage{textcomp}
\usepackage{multirow}
\usepackage{algorithm}
\usepackage[noend]{algorithmic}
\usepackage{balance}

\newtheorem{proposition}{Proposition}
\newtheorem{theorem}{Theorem}

\newtheorem{remark}{Remark}
\newtheorem{definition}{Definition}

\newcommand{\R}{\mathbb{R}}
\newcommand{\E}{\mathbb{E}}

\newcommand{\kcm}{\textsc{kcm}}

\newcommand{\kar}{\textsc{kar}}

\begin{document}

\title{Mitigating Convergence Collapse in Fixed-Target Anomaly Detectors via Kernel-Anchored Locality Regularization}

\newcommand{\affilCorresponding}{Universit\'e Paris-Saclay, CNRS, CentraleSup\'elec, LISN UMR 9015}
\newcommand{\affilCoauthor}{Universit\'e Paris-Saclay, CNRS, CentraleSup\'elec, Laboratoire Interdisciplinaire des Sciences du Num\'erique}

\author{Jos\'e Lucas De Melo Costa}
\authornote{Corresponding author.}
\orcid{0009-0007-5372-9101}
\affiliation{%
  \institution{\affilCorresponding}
  \city{Orsay}
  \country{France}}
\email{jose.lucas-de-melo-costa@universite-paris-saclay.fr}

\author{Fabrice Popineau}
\orcid{0000-0002-2941-9046}
\affiliation{%
  \institution{\affilCoauthor}
  \city{Orsay}
  \country{France}}
\email{fabrice.popineau@centralesupelec.fr}

\author{Arpad Rimmel}
\orcid{0009-0009-7028-3644}
\affiliation{%
  \institution{\affilCoauthor}
  \city{Orsay}
  \country{France}}
\email{arpad.rimmel@centralesupelec.fr}

\author{Bich-Li\^en Doan}
\orcid{0009-0001-5853-2486}
\affiliation{%
  \institution{\affilCoauthor}
  \city{Orsay}
  \country{France}}
\email{Bich-Lien.Doan@centralesupelec.fr}

\renewcommand{\shortauthors}{José Lucas De Melo Costa∗, Fabrice Popineau, Arpad Rimmel, \& Bich-Liên Doan}

\begin{abstract}
A family of tabular anomaly detectors trains a neural map toward a fixed target under squared-error loss and scores anomalies by the test-time residual; contraction matching, one-step rectified flow, and reconstruction autoencoders all fit this template. We characterize a \emph{convergence collapse}: better optimization makes the detector worse. At convergence, the learned map tracks the target even off-distribution, so the residual signal vanishes on anomalies as well as on normal data. These detectors therefore rely on implicit non-convergence (early stopping, capacity caps) to retain signal. We argue this is structural: effective anomaly detection requires a \emph{locality constraint} that blocks unconstrained extrapolation. Classical detectors ($k$NN, KDE, isolation forests, LOF) enforce locality explicitly; fixed-target neural detectors do not. We formalize the connection by showing that the kernel-regression analog of a fixed-target detector is a finite-bandwidth Nadaraya--Watson smoother, which we call \emph{Kernel Contraction Matching} (KCM). KCM is closed-form, training-free, and CPU-efficient, yet matches established neural baselines on ADBench. Building on this bridge, we introduce the \emph{Kernel-Anchored Regularizer} (KAR), which penalizes deviation of the neural prediction from a kernel-weighted average of training targets. Across collapse-prone ADBench datasets and three backbones, KAR mitigates collapse and improves AUROC under prolonged training.
\end{abstract}

\begin{CCSXML}
<ccs2012>
   <concept>
       <concept_id>10010147.10010257.10010258.10010259.10010263</concept_id>
       <concept_desc>Computing methodologies~Anomaly detection</concept_desc>
       <concept_significance>500</concept_significance>
       </concept>
   <concept>
       <concept_id>10010147.10010257.10010321.10010333</concept_id>
       <concept_desc>Computing methodologies~Kernel methods</concept_desc>
       <concept_significance>300</concept_significance>
       </concept>
   <concept>
       <concept_id>10010147.10010257.10010258.10010259</concept_id>
       <concept_desc>Computing methodologies~Unsupervised learning</concept_desc>
       <concept_significance>300</concept_significance>
       </concept>
 </ccs2012>
\end{CCSXML}

\ccsdesc[500]{Computing methodologies~Anomaly detection}
\ccsdesc[300]{Computing methodologies~Kernel methods}
\ccsdesc[300]{Computing methodologies~Unsupervised learning}

\keywords{Anomaly detection; Tabular anomaly detection; Convergence collapse; Kernel methods; Contraction Matching; Semi-supervised learning settings; ADBench}

\maketitle

\section{Introduction}
\label{sec:intro}

\begin{figure*}[t]
  \centering
  \includegraphics[width=0.88\linewidth]{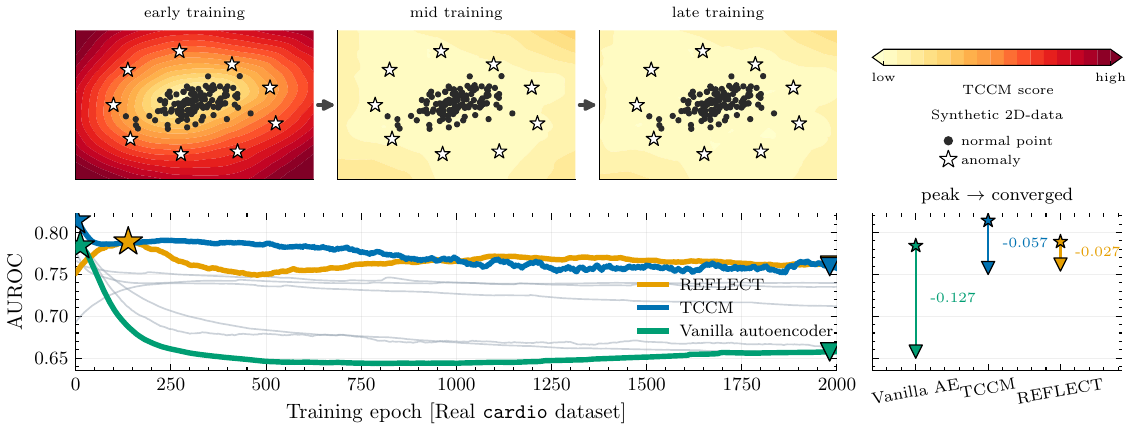}
  \caption{\textbf{Convergence collapse across fixed-target anomaly detectors.}
  \emph{Top:} the anomaly-score field flattens along training, until the
  anomalies (stars) are no longer separated from the normal cluster (dots).
  \emph{Bottom left:} on ADBench~\cite{hanADBenchAnomalyDetection2022}
  \texttt{7\_Cardiotocography}, the methods reach an early peak
  ($\star$) and then decay toward convergence ($\triangledown$).
  \emph{Bottom right:} the peak-to-converged AUROC drops are $0.128$, $0.051$,
  and $0.017$.}
  \Description{A four-panel figure showing that fixed-target anomaly detectors
  improve early in training and then collapse as training continues; the anomaly
  scores flatten, and the AUROC drops while the training loss keeps decreasing.}
  \label{fig:collapse}
\end{figure*}

Tabular anomaly detection identifies records that depart from normal behaviour, a long-standing problem with stakes in fraud screening, intrusion detection, condition monitoring, and medical diagnosis~\cite{hanADBenchAnomalyDetection2022, boumanUnsupervisedAnomalyDetection2024}. A growing number of detectors share one design, the \emph{fixed-target} template: a neural network is trained on normal data to map every input to a predetermined \emph{target}, a value set in advance as a function of the input alone, and a test point is scored by its \emph{residual}, the distance between the network's output and that target. The reconstruction autoencoder uses the input itself as its target~\cite{sattarovDiffusionScheduledDenoisingAutoencoders2025, ruffUnifyingReviewDeep2021}; flow-matching detectors instead regress a velocity field~\cite{liHowWhyTaming2025, liScalableExplainableProvably2025}, with contraction matching aiming that velocity at a fixed point. Fitting a fixed target is ordinary squared-error regression, with no adversarial game, no density model, and no anomaly labels; the score is the residual, read off in one forward pass.

This approach carries an implicit promise. Trained by regression, the detector should only get sharper with more epochs and a lower training loss, so training to convergence looks like good practice. \emph{We find the opposite.} As the squared-error loss is minimized, detection quality first improves and then degrades. The failure is intrinsic to this class of methods, and we call it \emph{convergence collapse}. At the optimum the network reproduces the fixed target on the normal data; because the target is a fixed function and the network is free to extrapolate, it reproduces the target off the normal distribution too, exactly where anomalies lie. The residual then shrinks on anomalies just as it shrinks on normal points, and AUROC declines even as the loss keeps falling. The collapsed detector is not mistrained: it has correctly solved its squared-error problem, and that problem was not detection. The pattern is not specific to one backbone: it has been studied for reconstruction autoencoders, where an expressive network reconstructs anomalies as accurately as normal points~\cite{gongMemorizingNormalityDetect2019, ghafourianTargetedCollapseRegularized2025}, and we show the same failure across contraction matching and flow matching. Figure~\ref{fig:collapse} shows an autoencoder, a contraction matching detector (TCCM \cite{liScalableExplainableProvably2025}), and a flow-matching detector (REFLECT \cite{beizaeeREFLECTRectifiedFlows2025a}), each climbing to an early peak and then decaying toward convergence. The published strength of fixed-target detectors therefore rests on \emph{implicit non-convergence}: early stopping and capacity caps that are tuned, rarely reported, and increasingly automated without naming the failure they prevent~\cite{huangEntropyStopUnsupervisedDeep2024}.

The root cause is a missing constraint. The squared-error loss constrains the network only on the training data, so its score at a test point can be set by free extrapolation rather than by the nearby training points. The methods that dominate tabular benchmarks~\cite{hanADBenchAnomalyDetection2022, boumanUnsupervisedAnomalyDetection2024}, among them $k$-nearest-neighbour distance, the local outlier factor~\cite{breunigLOFIdentifyingDensitybased2000}, kernel density estimation, one-class SVM~\cite{scholkopfSupportVectorMethod1999}, and isolation forest~\cite{liuIsolationForest2008}, rest on the opposite discipline: the score at a point is set mostly by training points in a \emph{bounded neighbourhood} of it. We call this property \emph{locality}. Classical detectors enforce it by construction; a fixed-target neural detector, left to converge, does not. Convergence collapse is the absence of a locality constraint.

Locality also points to a cure.\footnote{Code, configurations, and run logs: \url{https://github.com/jose-melo/kernel-contraction-matching}.} Consider the kernel-regression analog of a fixed-target detector, the smoother a wide network approaches in its lazy training regime~\cite{jacotNeuralTangentKernel2020, leeWideNeuralNetworks2019}. For the contraction target we call this analog \emph{Kernel Contraction Matching} ($\kcm$): instead of learning a map, it predicts a kernel-weighted average of the training targets, in which each training point's influence decays smoothly with distance. $\kcm$ has no optimizer and no epoch axis, and runs on a CPU. Even so, on ADBench it matches established neural baselines~\cite{hanADBenchAnomalyDetection2022, ruffDeepOneClassClassification2018}, in line with the finding that non-deep methods are hard to beat on tabular data~\cite{grinsztajnWhyTreebasedModels2022}. And $\kcm$ cannot collapse: every training point's influence is local, so its prediction levels off as a test point leaves the data and the residual \emph{grows} on far anomalies instead of vanishing.

$\kcm$ shows a collapse-proof solution exists, but does not let a practitioner keep a neural backbone, so we turn it into a regularizer. The \emph{Kernel-Anchored Regularizer} ($\kar$) penalizes the deviation of the neural prediction from the $\kcm$ prediction across the input domain, confining the network to a narrow tube around the collapse-immune $\kcm$ solution. The tube width is the only knob, expressed directly in the units of the data, so there is no loss weight to tune. Within the tube the anomaly score stays close to the $\kcm$ score at every input, AUROC can move only slightly, and the collapsed constant solution falls outside the hypothesis class. Across collapse-prone ADBench datasets and three backbones, $\kar$ mitigates collapse and improves AUROC under prolonged training. The division of labour between the two is plain: \emph{$\kcm$ is the detector to deploy when scoring is the whole task; $\kar$ is for when a trainable backbone is a requirement rather than a choice}, because the pipeline fine-tunes the map downstream, needs a differentiable score, or must expose a learned representation. $\kar$'s benchmark-wide contribution is the removal of the collapse at no accuracy cost against its anchor, not an accuracy gain over $\kcm$. We make three contributions:
\begin{enumerate}
  \item \textbf{Convergence collapse:} we show that fixed-target detectors
  degrade as they converge, because the network matches its target off the
  normal distribution and the residual vanishes on anomalies; the guarantee
  is formal for backbones whose training covers the off-data region and
  empirical for the clean-input ones (\S\ref{sec:formal}). We give a
  taxonomy of anomaly detectors by their off-data score behaviour.
  \item \textbf{Kernel Contraction Matching:} we identify the kernel
  analog of any fixed-target detector as a closed-form, training-free,
  collapse-immune smoother that still matches established neural baselines on
  ADBench.
  \item \textbf{Kernel-Anchored Regularizer:} we anchor a neural backbone to
  $\kcm$ and show this removes the collapse across all $47$ ADBench datasets
  and three backbones at no accuracy cost against the anchor; on the
  collapse-prone subset it also improves on it.
\end{enumerate}

\section{Related Work}
\label{sec:related}

Anomaly detection spans a long line of method families~\cite{chandolaAnomalyDetectionSurvey2009}, and its deep-learning branch is by now broad~\cite{pangDeepLearningAnomaly2021}. Most deep detectors for tabular data learn a model of normal data and score a test point by how poorly that model accounts for it: reconstruction autoencoders by the reconstruction error~\cite{sattarovDiffusionScheduledDenoisingAutoencoders2025, ruffUnifyingReviewDeep2021}, one-class networks such as Deep SVDD by the distance to a compact centre~\cite{ruffDeepOneClassClassification2018}, self-supervised detectors by the consistency of a pretext prediction~\cite{shenkarAnomalyDetectionTabular2021, qiuNeuralTransformationLearning2022, yinMCMMaskedCell2023, thimonierIndividualInputDeep2024, wangUnsupervisedRepresentationLearning2020}. A more recent line scores the residual of a generative model: diffusion-based detectors read off a denoising signal~\cite{livernocheDIFFUSIONMODELINGANOMALY2024, daiUnsupervisedAnomalyDetection2025}, and flow-matching detectors regress a velocity field toward a fixed target, first for images~\cite{liHowWhyTaming2025, beizaeeREFLECTRectifiedFlows2025a} and then, through contraction matching, for tabular data~\cite{liScalableExplainableProvably2025}. All fit the fixed-target template, and we study a failure shared by the whole template rather than adding another detector to it.

Classical detectors predate these deep models and remain strong baselines on tabular benchmarks~\cite{hanADBenchAnomalyDetection2022, goldsteinComparativeEvaluationUnsupervised2016, boumanUnsupervisedAnomalyDetection2024}. Neighbourhood methods score a point from its proximity to nearby training samples: the $k$-nearest-neighbour distance~\cite{ramaswamyEfficientAlgorithmsMining2000}, the local outlier factor~\cite{breunigLOFIdentifyingDensitybased2000}, and kernel density estimation~\cite{langOneClassProcess2022}. One-class SVMs separate the data from the origin with a kernel of bounded reach~\cite{scholkopfSupportVectorMethod1999}; isolation forests~\cite{liuIsolationForest2008} and the copula-based COPOD~\cite{liCOPODCopulaBasedOutlier2020} rank a point by how extreme it is relative to the training sample as a whole. In each design the score is read from the stored training data rather than from a parametric map free to extrapolate; LUNAR~\cite{goodgeLUNARUnifyingLocal2022} folds the same neighbourhood structure back into a graph neural network.

That a deep detector can be harmed by additional training is already documented, along two strands. The first concerns contamination: when the unlabelled training data contains outliers, a discriminative network fits the inliers before the outliers~\cite{wangEffectiveEndtoendUnsupervised2019}, so detection peaks early and falls as the outliers are absorbed~\cite{nkashamaDeepLearningNetwork2024}, prompting label-free stopping criteria such as EntropyStop~\cite{huangEntropyStopUnsupervisedDeep2024} and GradStop~\cite{zhangGradStopExploringTraining} alongside evaluation protocols for unstable runs~\cite{periniUncertaintyawareEvaluationAuxiliary2024}. The second concerns reconstruction autoencoders trained on clean normal data: a sufficiently expressive autoencoder reconstructs some anomalies as accurately as normal points~\cite{gongMemorizingNormalityDetect2019, ghafourianTargetedCollapseRegularized2025}, met by autoencoder-specific responses such as a memory module~\cite{gongMemorizingNormalityDetect2019} or a penalty on the latent norm~\cite{ghafourianTargetedCollapseRegularized2025}. A third failure is mode collapse in Deep SVDD, averted by architectural restrictions on bias and activations~\cite{ruffDeepOneClassClassification2018} or, when those restrictions prove limiting, by explicit regularizers~\cite{chongSimpleEffectivePrevention2020}. We show the effect is a property of the entire fixed-target template, following from a single missing ingredient, locality, which also yields the cure.

Our treatment of that failure draws on the link between wide networks and kernel regression: in the lazy training regime the network behaves like a fixed kernel predictor governed by the neural tangent kernel~\cite{jacotNeuralTangentKernel2020, leeWideNeuralNetworks2019}, a picture normally used to study generalization, and recently to build kernel predictors for tabular models directly~\cite{miftachovInterpretableTabularFoundation2026}. We use it in the opposite direction, asking which kernel smoother a fixed-target detector corresponds to, and find that this smoother retains the locality the trained network loses. Kernel density estimation already turns a kernel smoother into an anomaly score~\cite{langOneClassProcess2022}; our construction applies the same principle to the contraction target rather than to the data density.

\section{Convergence Collapse}
\label{sec:collapse}

Fixed-target detectors degrade as they are optimized. This section makes the assertion precise.

\subsection{The fixed-target template}
\label{sec:template}

\begin{definition}[Fixed-target detector]\label{def:fixedtarget}
A \emph{fixed-target detector} is determined by a \emph{target}
$T:\R^D\to\R^D$, a fixed function of the input, specified in advance and
independently of the trained map, a \emph{training-input distribution} $\mu$
derived from the normal data, and a neural map $m_\theta:\R^D\to\R^D$. The map
is trained by the squared-error objective
\begin{equation}
  \mathcal{L}(\theta)=\E_{x\sim\mu}\,\|m_\theta(x)-T(x)\|^2,
  \label{eq:fixedloss}
\end{equation}
and a test point is scored by the \emph{residual}
\begin{equation}
  s(x)=\|m_\theta(x)-T(x)\|.
  \label{eq:residual}
\end{equation}
\end{definition}

Three detector classes instantiate Definition~\ref{def:fixedtarget}. The reconstruction autoencoder takes the identity target $T(x)=x$ and $\mu=p_{\mathrm{train}}$, with $m_\theta$ the encoder--decoder composition~\cite{ruffUnifyingReviewDeep2021, sattarovDiffusionScheduledDenoisingAutoencoders2025}. Contraction matching~\cite{liScalableExplainableProvably2025} trains a velocity field toward the negated input, $T(z)=-z$; we train it on the noised marginal $\mu$ of $z_t=(1-t)\epsilon+tx$ and read the residual at $t=1$. The one-step rectified-flow detector regresses a velocity along a straight path to a fixed endpoint~\cite{beizaeeREFLECTRectifiedFlows2025a}; we re-implement its rectified-flow objective on tabular ADBench data, as the original work targets brain-MRI anomalies.

What separates a detector that can be trained to convergence from one that cannot is a single property: what bounds the anomaly score \emph{away from the training data}, where anomalies lie. Table~\ref{tab:taxonomy} organizes detectors along this axis and is exhaustive over the families that populate the ADBench~\cite{hanADBenchAnomalyDetection2022} top tier.

\begin{table}[t]
  \centering
  \caption{A taxonomy of anomaly detectors by what constrains their off-data
  scores. Fixed-target methods (T3) can collapse.}
  \label{tab:taxonomy}
  \footnotesize
  \begin{tabular}{@{}>{\raggedright\arraybackslash}p{0.24\columnwidth}>{\raggedright\arraybackslash}p{0.5\columnwidth}c@{}}
  \toprule
  \textbf{Family} & \textbf{Off-data score is bounded by} & \textbf{Collapse?} \\
  \midrule
  \textbf{T1.}~Explicit locality
  & Nearby training points; far queries score high.
  & No \\
  \textbf{T2.}~Density-coupled target
  & A $p_{\mathrm{train}}$-dependent target; off-manifold scores stay
    non-trivial.
  & No \\
  \textbf{T3.}~Fixed target
  & An unconstrained residual to a fixed target; convergence can erase it.
  & Yes \\
  \bottomrule
  \end{tabular}
\end{table}

\textbf{T1, explicit locality.} The score at a query is a function of the training points in a bounded neighbourhood of it: $k$-nearest-neighbour distance, the local outlier factor~\cite{breunigLOFIdentifyingDensitybased2000}, kernel density estimation, one-class SVM~\cite{scholkopfSupportVectorMethod1999}, isolation forest~\cite{liuIsolationForest2008}, and LUNAR~\cite{goodgeLUNARUnifyingLocal2022}. A query with no training point nearby is anomalous by construction.

\textbf{T2, density-coupled target.} The method regresses a target that is itself a functional of $p_{\mathrm{train}}$: denoising score matching fits the score $\nabla\log p_\sigma$ of the noised density~\cite{songGenerativeModelingEstimating2020}, and energy-based detectors fit an energy whose negatives depend on the model itself~\cite{yoonEnergyBasedModelsAnomaly2023}. Because the target encodes where the density's mass lies, its optimum stays non-trivial off-manifold.

\textbf{T3, fixed target.} The detectors of Definition~\ref{def:fixedtarget}. The target is fixed in advance and carries no information about where the normal data lies, so the residual off the data is whatever the network chooses to extrapolate. T3 is the only family in Table~\ref{tab:taxonomy} without an off-data floor on its score, and, we show next, the only one that collapses.

\subsection{The collapse, formally}
\label{sec:formal}

\begin{proposition}[Convergence collapse]\label{prop:collapse}
Let a fixed-target detector have target $T$, training-input distribution $\mu$,
and a hypothesis class $\mathcal{F}\subseteq C(\R^D,\R^D)$ of continuous maps
with $T\in\mathcal{F}$. Let $m^\star\in\arg\min_{m\in\mathcal{F}}\mathcal{L}(m)$
attain $\mathcal{L}(m^\star)=0$. Then $m^\star\equiv T$ on
$\operatorname{supp}(\mu)$, and the residual score \eqref{eq:residual} satisfies
$s\equiv 0$ on $\operatorname{supp}(\mu)$. Every test point in
$\operatorname{supp}(\mu)$, normal or anomalous, then receives the identical
score $0$, so the detector cannot rank anomalies above normal points: its AUROC
on $\operatorname{supp}(\mu)$ is $1/2$.
\end{proposition}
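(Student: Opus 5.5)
The argument runs in three steps. The loss vanishing forces $m^\star = T$ $\mu$-almost everywhere. Continuity upgrades this to equality on all of $\operatorname{supp}(\mu)$. The AUROC claim then follows from the definition of AUROC when every score is tied.

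\textbf{Step 1 (a.e.\ equality).} Set $g(x)=\|m^\star(x)-T(x)\|^2\ge 0$. The hypothesis $\mathcal{L}(m^\star)=\E_{x\sim\mu}\,g(x)=0$ says that a nonnegative measurable function has zero $\mu$-integral, so $g=0$ $\mu$-a.e. Here $g$ is measurable because it is continuous: $m^\star\in\mathcal{F}\subseteq C(\R^D,\R^D)$ and $T\in\mathcal{F}$. The assumption $T\in\mathcal{F}$ also shows the minimum value $0$ is attainable, so the hypothesis is consistent and $m^\star$ is a genuine minimizer. It is not otherwise needed.

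\textbf{Step 2 (from a.e.\ to the whole support).} This is the only step with any content. The set $U=\{x: g(x)>0\}$ is open because $g$ is continuous, and $\mu(U)=0$ by Step~1. In $\R^D$ the support is $\operatorname{supp}(\mu)=\{x: \mu(B_r(x))>0 \text{ for all } r>0\}$. Suppose some $x\in\operatorname{supp}(\mu)$ lies in $U$. Since $U$ is open, it contains a ball $B_r(x)$, and then $0<\mu(B_r(x))\le\mu(U)=0$, a contradiction. Hence $g\equiv 0$ on $\operatorname{supp}(\mu)$, which gives $m^\star\equiv T$ there and $s=\sqrt{g}\equiv 0$ there. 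I would add a remark that without continuity the conclusion holds only $\mu$-a.e. This is why the proposition restricts $\mathcal{F}$ to continuous maps. It also explains the phrase ``training covers the off-data region'' from the introduction: the statement says nothing outside $\operatorname{supp}(\mu)$.

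\textbf{Step 3 (AUROC).} Take any collection of test points in $\operatorname{supp}(\mu)$, each labelled normal or anomalous, with at least one of each. AUROC is the probability that a random anomaly scores above a random normal point, with ties counted as $1/2$:
\[
\Pr[s(X_a)>s(X_n)]+\tfrac12\Pr[s(X_a)=s(X_n)].
\]
Every score equals $0$, so the first term is $0$ and the tie probability is $1$, giving AUROC $=1/2$. Equivalently, the ROC curve of a constant score is the diagonal.

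I expect no real obstacle. The one point to state carefully is the tie convention in the definition of AUROC, since under a strict-inequality convention the value would be $0$. I would state the convention explicitly when giving the $1/2$ claim.
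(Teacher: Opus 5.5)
Your proof is correct and follows the same route as the paper. The paper also passes from $\mu$-a.e.\ agreement to agreement on $\operatorname{supp}(\mu)$ using the fact that balls around support points have positive mass; it phrases this as density of the agreement set plus continuity on its closure, where you use the open null set $\{g>0\}$, which is the same fact stated contrapositively, and your explicit tie-counted-as-$1/2$ convention matches the paper's ``ranks uniformly at random'' reading of the AUROC claim.
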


\begin{proof}
$\mathcal{L}(m^\star)=\int\|m^\star-T\|^2\,\mathrm{d}\mu=0$ forces
$m^\star=T$ on a set $A$ with $\mu(A^c)=0$. The set $A$ is dense in
$\operatorname{supp}(\mu)$: every open ball around a support point has positive
$\mu$-measure and so cannot lie inside the null set $A^c$, hence meets $A$.
Both $m^\star$ and $T$ are continuous, and two continuous maps that agree on a
dense subset agree on its closure; therefore $m^\star\equiv T$ on
$\operatorname{supp}(\mu)$, and $s=\|m^\star-T\|\equiv 0$ there. A score
constant on a set ranks the points of that set uniformly at random, which gives
AUROC $1/2$.
\end{proof}

\paragraph{Scope of Proposition~\ref{prop:collapse}.}
The collapse rests on two premises: the target $T$ is fixed in advance, and $\mathcal{F}$ can realize $T$ on the region that matters. The support of $\mu$ decides which backbones the guarantee covers. \emph{(i) Formal.} Contraction matching and the rectified-flow detectors train on a \emph{noised} marginal, so $\operatorname{supp}(\mu)$ covers the off-manifold region where anomalies lie and Proposition~\ref{prop:collapse} applies with no further assumption: the optimum drives the residual to zero on anomalies as exactly as on normal points. \emph{(ii) Empirical.} The reconstruction autoencoder and Deep SVDD~\cite{ruffDeepOneClassClassification2018} (target a constant $c$) train on the clean normal cloud, so Proposition~\ref{prop:collapse} pins $s\equiv 0$ only there; off it the residual depends on how $\mathcal{F}$ extrapolates the near-identity (resp.\ near-constant) map. For these backbones the off-manifold part of the claim is empirical, not proved: the smooth networks used in practice extrapolate their on-data fit, and Table~\ref{tab:breadth} shows the same decay on both. A regularity assumption on the realized map off $\operatorname{supp}(\mu)$, a Lipschitz bound on $m^\star-T$, is the hypothesis that would make case (ii) formal; we do not assume it.

Figure~\ref{fig:collapse} shows the collapse along training: in the lower panels, nine detectors trained on the ADBench \texttt{7\_Cardiotocography} dataset for $3000$ epochs and $3$ seeds all reach an early AUROC peak and then degrade, although the training loss keeps decreasing. The peak-to-final AUROC drop is $0.128$ for the autoencoder, $0.051$ for contraction matching, and $0.017$ for the rectified-flow detector. Table~\ref{tab:breadth} gives the dataset-level view across six residual-score detectors and six ADBench datasets; the gap is largest on mid-difficulty datasets and is not tied to a particular backbone or dataset.

\begin{table}[t]
  \centering
  \caption{Peak-to-last-epoch AUROC gap, averaged over $3$ seeds after $100$
  epochs of vanilla training. Positive values mean that training past the early
  peak hurts detection.}
  \label{tab:breadth}
  \footnotesize
  \setlength{\tabcolsep}{2pt}
  \begin{tabular}{@{}lcccccc@{}}
  \toprule
  \textbf{Detector} & \texttt{vert.} & \texttt{annth.} & \texttt{Pima}
  & \texttt{Ionos.} & \texttt{WDBC} & \texttt{cardio} \\
  \midrule
  Denoising autoencoder~\cite{ruffUnifyingReviewDeep2021}
    & $.022$ & $.050$ & $.039$ & $.004$ & $.007$ & $.070$ \\
  Targeted-collapse autoencoder \cite{ghafourianTargetedCollapseRegularized2025}
    & $.000$ & $.008$ & $.018$ & $.005$ & $.007$ & $.065$ \\
  Deep SVDD~\cite{ruffDeepOneClassClassification2018}
    & $.019$ & $.119$ & $.019$ & $.003$ & $.005$ & $.034$ \\
  Random-distance prediction \cite{wangUnsupervisedRepresentationLearning2020}
    & $.106$ & $.030$ & $.048$ & $.011$ & $.013$ & $.011$ \\
  Rectified-flow correction~\cite{beizaeeREFLECTRectifiedFlows2025a}
    & $.018$ & $.110$ & $.091$ & $.004$ & $.005$ & $.018$ \\
  Flow-matching detector~\cite{liScalableExplainableProvably2025}
    & $.218$ & $.047$ & $.042$ & $.072$ & $.002$ & $.015$ \\
  \bottomrule
  \end{tabular}
\end{table}

Two observations rule out a simple early-stopping interpretation. The best epoch is unstable: in Figure~\ref{fig:collapse} it varies by more than two orders of magnitude and depends on the seed. The optimized loss is also misleading, since it decreases monotonically while AUROC follows an inverted-U curve, so monitoring the training loss pushes the detector toward collapse.

This is not ordinary overfitting. Overfitting is a finite-sample failure, the empirical objective improving while population performance worsens; convergence collapse is a population-objective failure, the optimum itself having no anomaly signal, which Proposition~\ref{prop:collapse} proves independently of train--test generalization. The experiments agree: the held-out-normal residual tracks the training residual with a median difference of $0.01$ along the contraction backbone's trajectory, far below the normal--anomaly gap the detector must preserve.

Proposition~\ref{prop:collapse} also names the only two ways out, one per premise. Negating the first gives a density-coupled target, the route of family T2. Negating the second restricts $\mathcal{F}$ so that the constant target is unreachable on the region that matters; a finite-bandwidth local smoother does exactly this, the route this paper takes next.

\section{Kernel Contraction Matching}
\label{sec:kcm}

Section~\ref{sec:formal} closed by naming two ways out of the collapse, one for each premise of Proposition~\ref{prop:collapse}. This section follows the second: restrict the map so that the fixed target cannot be reproduced away from the data. The restriction needs no new method. We take a fixed-target detector and replace its trained map by the kernel-regression estimator of the same target. The result is a closed-form detector we call Kernel Contraction Matching (\kcm).

\begin{figure}[t]
  \centering
  \includegraphics[width=0.75\columnwidth]{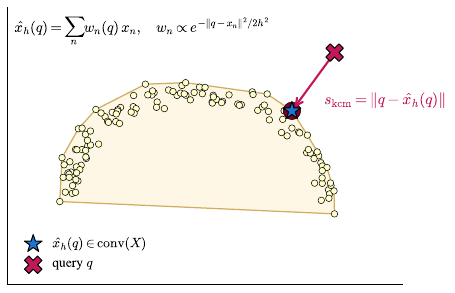}
  \caption{\kcm{} at a query $q$: the prediction $\hat{x}_h(q)=\sum_n w_n(q)\,x_n$ with $w_n\!\propto\!e^{-\|q-x_n\|^2/2h^2}$ is a kernel-weighted average of the training normals (point size and colour scale with the weight). The reconstruction lies in $\operatorname{conv}(X)$ (Proposition~\ref{prop:immune}), and the anomaly score $s_{\kcm}(q)=\|q-\hat{x}_h(q)\|$ is the residual arrow.}
  \Description{Schematic of KCM at a single query: training points sit inside their convex hull (shaded polygon); point sizes and colours encode their kernel weight to a query (red X) outside the hull; a blue star inside the hull marks the kernel-weighted-average reconstruction; an arrow from the query to the reconstruction is labelled with the KCM score, the residual norm.}
  \label{fig:kcm-schematic}
\end{figure}

\paragraph{The kernel-regression analog.}

A fixed-target detector fits a neural map $m_\theta$ to a target $T$ on inputs drawn from $\mu$ (Definition~\ref{def:fixedtarget}). Its \emph{kernel-regression analog} keeps the target and the residual score of Eq.~\eqref{eq:residual} and changes only the map. In place of a trained network it predicts a kernel-weighted average of the training targets. Let $X=\{x_n\}_{n=1}^{N}\subset\R^D$ be the training normals and let $q\in\R^D$ be a query point. With an RBF kernel $K_h(q,x)=\exp(-\|q-x\|^2/2h^2)$ of bandwidth $h$, define the weights and the estimator
\begin{equation}
  w_n(q)=\frac{K_h(q,x_n)}{\sum_{m=1}^{N}K_h(q,x_m)},
  \qquad
  f_{\kcm}(q)=\sum_{n=1}^{N}w_n(q)\,T(x_n).
  \label{eq:kcm}
\end{equation}
This is the Nadaraya--Watson smoother of the target values, a classical estimator. For the contraction target $T(x)=-x$ it reads $f_{\kcm}(q)=-\hat{x}_h(q)$, where
\begin{equation}
  \hat{x}_h(q)=\sum_{n=1}^{N}w_n(q)\,x_n
  \label{eq:recon}
\end{equation}
is the kernel reconstruction of $q$ from the training normals. The anomaly score is the residual of Definition~\ref{def:fixedtarget},
\begin{equation}
  s_{\kcm}(q)=\bigl\|f_{\kcm}(q)-T(q)\bigr\|=\bigl\|q-\hat{x}_h(q)\bigr\|.
  \label{eq:kcmscore}
\end{equation}

The same substitution sends the other members of the family to the same place: for the identity target of the autoencoder, $T(x)=x$, the analog is $f_{\kcm}(q)=\hat{x}_h(q)$ and the score is again Eq.~\eqref{eq:kcmscore}, the fixed target cancelling in the residual. \kcm{} is the analog of the fixed-target family, not of one detector in it.

Equation~\eqref{eq:kcm} is a \emph{linear smoother} of the fixed target: a prediction linear in the training targets $T(x_n)$, with input-dependent weights. The estimator is textbook, and we do not claim it as new. What we claim is that it belongs to the same estimator class a trained fixed-target detector occupies in the wide-network limit, that unlike the trained detector it cannot collapse, and that it anchors the regularizer of Section~\ref{sec:kar}; the rest of this section establishes the first two.

\paragraph{The lazy limit of a fixed-target detector.}

The lazy training regime turns the word ``analog'' into a theorem: a wide fixed-target detector trained by gradient flow is itself a linear smoother of its targets. Let $m_\theta$ be a wide network at the standard neural-tangent parameterization, trained on the squared-error objective of Eq.~\eqref{eq:fixedloss} over inputs $x_1,\dots,x_N$ sampled from $\mu$, with targets stacked as $\mathbf{T}=(T(x_1),\dots,T(x_N))$. Write $\Theta$ for the network's neural tangent kernel, $\Theta(X,X)$ for its Gram matrix on the training inputs, positive definite for distinct inputs and acting identically on each output coordinate, and $\Theta(q,X)$ for the row of kernel values between a query $q$ and those inputs.

\begin{theorem}[Lazy limit of a fixed-target detector]\label{thm:lazy}
In the infinite-width limit, and from an initialization with zero output, the
detector trained by gradient flow, the continuous-time limit of gradient
descent at learning rate $\eta$, realizes after training time $\tau$ the
predictor
\begin{equation}
  m_\tau(q)=\Theta(q,X)\,\Theta(X,X)^{-1}
          \bigl(I-e^{-\eta\tau\,\Theta(X,X)}\bigr)\,\mathbf{T}.
  \label{eq:lazy}
\end{equation}
As $\tau\to\infty$ the matrix exponential vanishes and $m_\tau$ converges to the
tangent-kernel interpolant
$m_\infty(q)=\Theta(q,X)\,\Theta(X,X)^{-1}\mathbf{T}$, which fits the targets
exactly on the training inputs.
\end{theorem}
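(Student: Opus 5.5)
The argument reduces the infinite-width dynamics to a linear ODE in function space and solves it in closed form. The one external input is the standard neural-tangent-kernel result of \cite{jacotNeuralTangentKernel2020, leeWideNeuralNetworks2019}: at the NTK parameterization, as the width tends to infinity, the tangent kernel $\Theta_\tau(q,x)=\nabla_\theta m_\theta(q)\,\nabla_\theta m_\theta(x)^{\top}$ converges to a deterministic limit $\Theta$. This limit stays constant throughout training on any finite horizon, so $\Theta_\tau\equiv\Theta$. I would invoke this as a black box rather than reprove it.

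\emph{Dynamics on the training set.} I work with the empirical version of Eq.~\eqref{eq:fixedloss}, normalized as $\tfrac12\sum_n\|m_\theta(x_n)-T(x_n)\|^2$ so that constants match Eq.~\eqref{eq:lazy}. Gradient flow is $\dot\theta=-\eta\,\nabla_\theta\mathcal{L}$. By the chain rule, the output at any query evolves as
\[
\dot m_\tau(q)=-\eta\,\Theta(q,X)\bigl(m_\tau(X)-\mathbf{T}\bigr).
\]
Here $m_\tau(X)$ stacks the outputs on the training inputs, and $\Theta$ acts identically on each output coordinate, so the $D$ coordinates decouple. Setting $q=X$ gives a linear ODE for the residual $r_\tau=m_\tau(X)-\mathbf{T}$:
\[
\dot r_\tau=-\eta\,\Theta(X,X)\,r_\tau .
\]
Zero-output initialization gives $r_0=-\mathbf{T}$, so $r_\tau=-e^{-\eta\tau\Theta(X,X)}\mathbf{T}$ and $m_\tau(X)=(I-e^{-\eta\tau\Theta(X,X)})\mathbf{T}$.

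\emph{Extension to a query.} I integrate the query equation in time using $m_0(q)=0$:
\[
m_\tau(q)=\eta\,\Theta(q,X)\int_0^\tau e^{-\eta s\Theta(X,X)}\,ds\;\mathbf{T}.
\]
Since $\Theta(X,X)$ is positive definite, and hence invertible, the integral evaluates to $\eta^{-1}\Theta(X,X)^{-1}(I-e^{-\eta\tau\Theta(X,X)})$. This gives Eq.~\eqref{eq:lazy}.

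\emph{Long-time limit.} Positive definiteness means every eigenvalue $\lambda_i$ of $\Theta(X,X)$ is positive. The exponential then decays like $e^{-\eta\tau\lambda_{\min}}\to0$, which yields $m_\infty$. Evaluating $m_\infty$ at $q=x_n$ gives $\Theta(X,X)\Theta(X,X)^{-1}\mathbf{T}=\mathbf{T}$, so the interpolant fits the targets exactly.

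\emph{Main obstacle.} The delicate step is the frozen-kernel approximation, which is what ``in the infinite-width limit'' means here. Two issues need care. First, the constancy results hold uniformly over a finite horizon $[0,\tau]$ for each fixed $\tau$, not uniformly in $\tau$. I would therefore state the $\tau\to\infty$ claim as the limit of the infinite-width predictors, taking width to infinity first and time second, rather than swapping the limits. Second, a standard random initialization gives a nonzero Gaussian-process output $m_0$. The hypothesis of zero-output initialization is exactly what removes the extra term $m_0(q)-\Theta(q,X)\Theta(X,X)^{-1}(I-e^{-\eta\tau\Theta})m_0(X)$. I would note that an antisymmetric or centred initialization achieves this while leaving the limiting $\Theta$ unchanged.
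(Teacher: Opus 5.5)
Your proposal is correct and follows the paper's proof: freeze the tangent kernel in the infinite-width limit, solve the linear ODE $\dot m_\tau(X)=-\eta\,\Theta(X,X)\,(m_\tau(X)-\mathbf{T})$ from zero output, extend the solution to a query, and let $\tau\to\infty$. The one difference is that you derive the query formula by integrating $\dot m_\tau(q)=-\eta\,\Theta(q,X)\,(m_\tau(X)-\mathbf{T})$ in time, whereas the paper simply asserts $m_\tau(q)=\Theta(q,X)\,\Theta(X,X)^{-1}m_\tau(X)$; your version shows that this identity depends on the zero-output initialization, and it also fixes the $\frac{1}{2}\sum_n$ loss normalization under which the constant $\eta$ in the exponent is exact.
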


\begin{proof}
See Appendix~\ref{app:proofs}.
\end{proof}

\begin{remark}[Early stopping is regularization]\label{rem:ridge}
The operator $I-e^{-\eta\tau\,\Theta(X,X)}$ in Eq.~\eqref{eq:lazy} is a spectral
filter. On an eigencomponent of $\Theta(X,X)$ with eigenvalue $\sigma$ it
applies the factor $1-e^{-\eta\tau\sigma}$, near one for large $\sigma$ and near
zero for small $\sigma$. It belongs to the same family of monotone shrinkage
filters as the ridge filter $\sigma/(\sigma+\lambda)$, with
$\lambda\approx 1/(\eta\tau)$: more training is weaker regularization. The per-dataset stopping epoch that fixed-target detectors depend on
(\S\ref{sec:collapse}) is, read through the lazy limit, an implicit and
unreported choice of this regularization.
\end{remark}

Theorem~\ref{thm:lazy} places the trained detector and \kcm{} in one class: both are linear smoothers of the fixed target, $\sum_n\alpha_n(q)\,T(x_n)$ with input-dependent weights, the former with the tangent-kernel weights of Eq.~\eqref{eq:lazy}, the latter with the Nadaraya--Watson weights of Eq.~\eqref{eq:kcm}. \kcm{} is not the lazy limit: one is regularized by the training time on the neural tangent kernel, the other by the bandwidth of a finite-bandwidth RBF. The theorem establishes something weaker and more useful than an identity: the two are the same kind of estimator, and which smoother in that class one uses is what the rest of this section turns on.

\paragraph{Why build on a limit that excludes feature learning?}
The infinite-width theory behind Theorem~\ref{thm:lazy} assumes no features are learned, while large finite networks are valued exactly because they warp their empirical tangent kernel toward the task, so a method drawn from the lazy limit can look self-contradictory. Feature learning, however, is the mechanism that lets a finite network match the fixed target \emph{off} the data, and Proposition~\ref{prop:collapse} says that this is what collapse is. A non-feature-learning prior is not an oversight here; it supplies the one thing the feature-learning regime cannot, an off-hull floor on the score. \kar{} (Section~\ref{sec:kar}) does not claim the lazy limit describes the trained network; it uses the kernel smoother as a constraint set, and its radius prices the tension: $\rho=0$ is the anchor with no feature learning, $\rho\to\infty$ is full feature learning and full collapse. How much feature learning survives inside the tube is not measured here; an empirical-tangent-kernel drift measurement along training would quantify it.

\paragraph{Bandwidth by closed-form leave-one-out.}

\kcm{} has one parameter, the bandwidth $h$. We set it by leave-one-out cross-validation on the training normals. For a candidate $h$, write $e_n(h)=\bigl\|x_n-\hat{x}_h^{(-n)}(x_n)\bigr\|^2$ for the squared leave-one-out reconstruction error of $x_n$, where $\hat{x}_h^{(-n)}(x_n)$ is the estimator of Eq.~\eqref{eq:recon} evaluated at $x_n$ with $x_n$ removed from the kernel sum and the weights renormalized over the remaining points. A semi-supervised training set is rarely perfectly clean: a few unlabeled anomalies sit among the normals, and they carry the largest leave-one-out errors. We therefore select $h$ by the \emph{trimmed} mean of these errors, discarding the largest fraction $\beta$ before averaging. With $K=\lceil(1-\beta)N\rceil$ and $e_{(1)}(h)\le\dots\le e_{(N)}(h)$ the sorted errors,
\begin{equation}
  h^\star=\arg\min_{h}\;\frac{1}{K}\sum_{n=1}^{K}e_{(n)}(h).
  \label{eq:loo}
\end{equation}
We fix $\beta=0.1$. Trimming keeps the contaminating anomalies from pulling the bandwidth selector; $\beta=0$ recovers the plain leave-one-out mean. Computing $e_n(h)$ for all $n$ is a single operation: set the $n$-th entry of each kernel row to zero before renormalizing. The pairwise distances among the training normals are computed once, and the objective is then one $O(N^2)$ pass for each bandwidth on a fixed logarithmic grid. Equation~\eqref{eq:loo} uses the training normals alone, with no anomaly labels, no validation split, and no gradient steps: the regularization Remark~\ref{rem:ridge} leaves implicit in a stopping epoch is fixed here at one explicit value. Algorithm~\ref{alg:kcm} states the procedure in full.

\begin{algorithm}[t]
\caption{Kernel Contraction Matching}
\label{alg:kcm}
\begin{algorithmic}[1]
\REQUIRE training normals $X=\{x_n\}_{n=1}^N$, bandwidth grid $\mathcal{H}$,
         trim fraction $\beta$
\STATE \textbf{Fit (closed form, no labels):}
\STATE compute the kernel matrix $K_h$ on $X$ for each $h\in\mathcal{H}$
\STATE $h^\star\gets\arg\min_{h\in\mathcal{H}}$ trimmed-mean$_\beta\bigl\{\,e_n(h)\,\bigr\}_{n=1}^{N}$
       \hfill $\triangleright$ Eq.~\eqref{eq:loo}
\STATE \textbf{Score a query $q$:}
\RETURN $s_{\kcm}(q)=\|q-\hat{x}_{h^\star}(q)\|$
       \hfill $\triangleright$ Eq.~\eqref{eq:kcmscore}
\end{algorithmic}
\end{algorithm}

\paragraph{KCM cannot collapse.}

\kcm{} cannot undergo a convergence collapse: it is a closed form with no optimization trajectory to degrade along, and its score has a structural floor outside the training hull.

\begin{proposition}[Off-hull score floor]\label{prop:immune}
For every bandwidth $h>0$ and every query $q\in\R^D$, the kernel reconstruction
$\hat{x}_h(q)$ lies in the convex hull $\operatorname{conv}(X)$ of the training
normals, so the \kcm{} score is bounded below by the distance from the query to
that hull,
\begin{equation}
  s_{\kcm}(q)=\|q-\hat{x}_h(q)\|\;\ge\;\operatorname{dist}\!\bigl(q,\operatorname{conv}(X)\bigr).
  \label{eq:floor}
\end{equation}
The bound is zero only for $q$ inside the hull and grows without bound as $q$
leaves it: the \kcm{} score is strictly positive outside the convex hull of the
training normals.
\end{proposition}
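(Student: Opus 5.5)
The proof splits into three steps: show the reconstruction is a convex combination of the training normals, read off the distance floor, and then check the positivity and growth claims.

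\emph{Step 1: the reconstruction lies in the hull.} Fix $h>0$ and $q\in\R^D$. Each $K_h(q,x_n)=\exp(-\|q-x_n\|^2/2h^2)$ is strictly positive, so the denominator in Eq.~\eqref{eq:kcm} is positive and finite. The weights are therefore well defined, with $w_n(q)>0$ and $\sum_n w_n(q)=1$. By Eq.~\eqref{eq:recon}, $\hat{x}_h(q)$ is a convex combination of $x_1,\dots,x_N$, so $\hat{x}_h(q)\in\operatorname{conv}(X)$.

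\emph{Step 2: the floor.} Since $\operatorname{dist}(q,C)=\inf_{y\in C}\|q-y\|$, and $y=\hat{x}_h(q)$ is one admissible point of $C=\operatorname{conv}(X)$, we get
\[
s_{\kcm}(q)=\|q-\hat{x}_h(q)\|\ge\operatorname{dist}\bigl(q,\operatorname{conv}(X)\bigr).
\]
This is Eq.~\eqref{eq:floor}.

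\emph{Step 3: positivity and growth.} The hull of finitely many points is compact, hence closed. So $\operatorname{dist}(q,\operatorname{conv}(X))=0$ exactly when $q\in\operatorname{conv}(X)$, and it is strictly positive otherwise. Combined with Step 2, $s_{\kcm}(q)>0$ for every $q\notin\operatorname{conv}(X)$.

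For the growth claim, compactness gives a bound $R=\max_n\|x_n\|$ on the hull. The triangle inequality then yields $\operatorname{dist}(q,\operatorname{conv}(X))\ge\|q\|-R$, which tends to infinity as $\|q\|\to\infty$, so the floor is unbounded along any ray leaving the hull.

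The main care point is how to read ``the bound is zero only for $q$ inside the hull.'' The statement is about the lower bound $\operatorname{dist}(q,\operatorname{conv}(X))$, not about the score itself. Inside the hull the score need not vanish, and the proof should not claim it does. The rest is routine.
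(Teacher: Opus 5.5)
Your proof is correct and follows the paper's own argument: positive RBF weights that sum to one place $\hat{x}_h(q)$ in $\operatorname{conv}(X)$, and the floor then follows from the definition of distance to a set. Your Step~3 spells out what the paper compresses into ``the hull is bounded, so its distance function is unbounded'': closedness of the hull gives strict positivity off it, and the bound $\operatorname{dist}(q,\operatorname{conv}(X))\ge\|q\|-\max_n\|x_n\|$ gives the growth.
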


\begin{proof}
The RBF kernel is positive, so the weights $w_n(q)$ of Eq.~\eqref{eq:kcm} are
non-negative and sum to one; $\hat{x}_h(q)=\sum_n w_n(q)\,x_n$ is then a convex
combination of $x_1,\dots,x_N$ and lies in $\operatorname{conv}(X)$. Every
$c\in\operatorname{conv}(X)$ obeys
$\|q-c\|\ge\operatorname{dist}(q,\operatorname{conv}(X))$ by definition of the
distance to a set, and $c=\hat{x}_h(q)$ gives Eq.~\eqref{eq:floor}; the hull is
bounded, so its distance function is unbounded.
\end{proof}

Theorem~\ref{thm:lazy} explains the contrast. The trained detector and \kcm{} are both linear smoothers of the target; the lazy-limit weights are signed and need not sum to one, so training off the data collapses the score (Proposition~\ref{prop:collapse}), while \kcm{}'s non-negative weights confine $\hat{x}_h(q)$ to $\operatorname{conv}(X)$ and floor it. This places \kcm{} in family T1 of Table~\ref{tab:taxonomy}: it is the collapse-proof detector Section~\ref{sec:collapse} called for, and the anchor the next section regularizes a neural backbone toward.

\section{KCM is Competitive on ADBench}
\label{sec:kcm-results}

Section~\ref{sec:kcm} introduced \kcm{} as a closed-form detector and proved it cannot collapse. A separate question is whether a detector built to avoid a failure mode also detects anomalies well. We answer it on ADBench: \kcm{} is level with the strongest label-free detectors on the benchmark, and it gets there with no training, no GPU, and no labels.

\paragraph{Evaluation protocol.}
We follow the semi-supervised protocol. For each dataset \kcm{} is fit on a sample of normal points and scores a held-out mixture of normal and anomalous points; the fit is the procedure of Section~\ref{sec:kcm} unchanged ($\beta=0.1$), no anomaly labels enter it, and features are standardized per coordinate. We report mean AUROC over the $47$ datasets, averaged over three seeds. \kcm{}'s numbers are our runs; the baseline numbers come from the TCCM benchmark~\cite{liScalableExplainableProvably2025}, which reports per-dataset AUROC for its $44$ baselines plus TCCM on the same $47$ ADBench datasets.

\paragraph{Datasets.}
ADBench~\cite{hanADBenchAnomalyDetection2022} collects $47$ tabular datasets from healthcare, finance, image-derived, and network domains, spanning nearly four orders of magnitude in sample size and up to more than a thousand features.

\paragraph{Baseline methods.}
We compare against the ADBench suite, $45$ detectors that together with \kcm{} give $46$ label-free methods and cover the families of the taxonomy in Table~\ref{tab:taxonomy}. They are classical density and distance detectors (kernel density estimation, $k$-nearest neighbours, the local outlier factor~\cite{breunigLOFIdentifyingDensitybased2000}, isolation forest~\cite{liuIsolationForest2008}, the one-class SVM~\cite{scholkopfSupportVectorMethod1999}, COPOD~\cite{liCOPODCopulaBasedOutlier2020}, Gaussian mixtures), deep detectors (the autoencoder~\cite{hanADBenchAnomalyDetection2022}, Deep SVDD~\cite{ruffDeepOneClassClassification2018}, LUNAR~\cite{goodgeLUNARUnifyingLocal2022}, the diffusion-time estimators DTE-NP (non-parametric) and DTE-Cat (categorical)~\cite{livernocheDIFFUSIONMODELINGANOMALY2024}), and the fixed-target detectors of \S\ref{sec:collapse}, among them TCCM~\cite{liScalableExplainableProvably2025}. We state the result as three claims.

\begin{figure*}[t]
  \centering
  \includegraphics[width=0.86\textwidth]{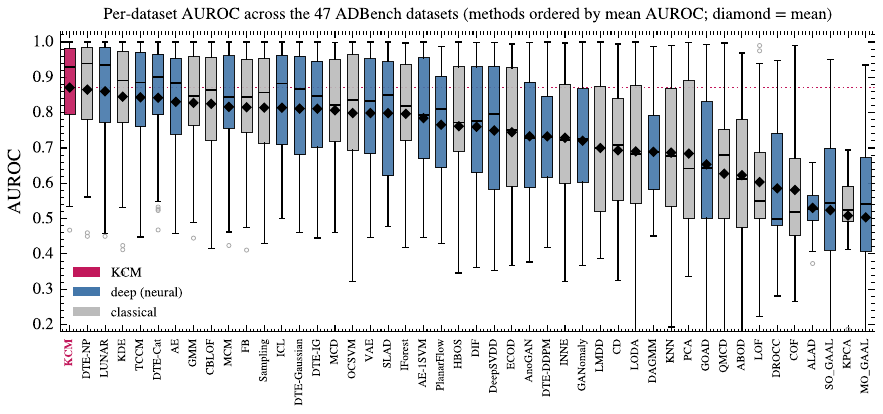}
  \caption{Per-dataset AUROC of the $46$ label-free detectors over the $47$ ADBench datasets, ordered by mean (diamond). \kcm{} (red) has the highest mean AUROC, level with DTE-NP and LUNAR (the only two methods within $0.012$).}

  \Description{Per-method AUROC distribution over the 47 ADBench datasets, methods ordered left-to-right by mean AUROC. Each method is a box-and-whisker with the mean as a diamond. KCM, drawn red on the far left, has the highest mean AUROC; DTE-NP and LUNAR are next; the bulk of classical detectors (grey) and deep detectors (blue) follow.}  \label{fig:kcm-boxplot}
\end{figure*}

\paragraph{(R1) KCM ranks first by mean AUROC.}
Across the $47$ datasets \kcm{} reaches a mean AUROC of $0.871$, the highest of the $46$ label-free detectors. The runners-up are the non-parametric DTE-NP at $0.865$ and the graph detector LUNAR at $0.860$; the strongest classical baseline, kernel density estimation, follows at $0.845$, and TCCM at $0.843$.\footnote{TCCM is re-run under our harness and without epoch fine-tuning; the TCCM paper reports $0.867$ on the same $47$ datasets.} Figure~\ref{fig:kcm-boxplot} plots the per-dataset AUROC of every method ordered by mean. \kcm{} sits at the left with the highest mean and no low tail: rarely the single best detector on a dataset ($5$ of $47$) but almost never far down the list (top five on $21$ of $47$), it leads the aggregate by consistency rather than by spiking on a few datasets. The trimmed selector contributes a small part of the lead: the untrimmed mean gives $0.866$, also first, and the trim at $\beta=0.1$ adds $0.005$. On mean AUPR \kcm{} ranks third at $0.644$, behind LUNAR and DTE-NP; it leads the AUROC leaderboard and stays in the top tier on AUPR.

\paragraph{(R2) KCM is level with the two strongest deep detectors and clear of
the rest.} A mean of $0.871$ against $0.865$ is too small to carry an ordering on its own: the seed-to-seed standard deviation is about $10^{-3}$, and the leads over DTE-NP ($+0.006$) and LUNAR ($+0.011$) sit within the run-to-run spread, so \kcm{} is in practice indistinguishable from the two strongest deep detectors. The margins over the methods below them are larger and consistent: $+0.026$ over kernel density estimation, $+0.028$ over TCCM, $+0.040$ over the autoencoder, and $+0.074$ over isolation forest. The Nemenyi critical-difference diagram in Figure~\ref{fig:kcm-cd} ($\mathrm{CD}=2.43$) gives the same picture as a ranking: it places \kcm{}, DTE-NP, and LUNAR in one group, clear of the rest.

\begin{figure}[t]
  \centering
  \includegraphics[width=0.84\columnwidth]{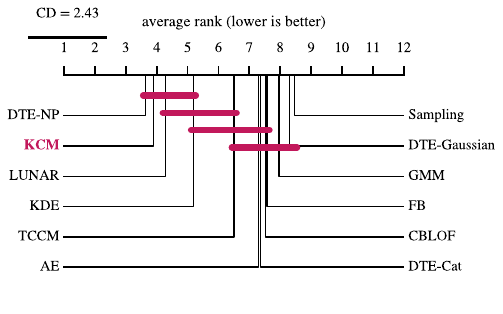}
  \caption{Nemenyi critical-difference diagram, top twelve methods ($\mathrm{CD}=2.43$); methods joined by a bar are not significantly different. \kcm{} shares the leading group with DTE-NP and LUNAR.}
  \Description{Nemenyi critical-difference diagram over the top-twelve detectors on the 47 ADBench datasets, with critical difference 2.43. Methods are arranged along a horizontal mean-rank axis; methods whose ranks differ by less than the critical difference are joined by a thick horizontal bar. KCM, DTE-NP and LUNAR are in the leading group.}
  \label{fig:kcm-cd}
\end{figure}

\paragraph{(R3) KCM reaches this position with no training.}
\kcm{} trains nothing: no gradient steps, no epochs, no GPU, no early-stopping search, and no labels, only the closed-form fit of Section~\ref{sec:kcm}. For $N$ training normals and $M$ test points it costs $O(N^2D)$ to select the bandwidth and $O(NMD)$ to score; for large $N$ a uniform reference subsample of $N_0$ points (we use $N_0=2000$) bounds the fit at $O(N_0^2D)$. Table~\ref{tab:kcm-speed} reports single-CPU-core wall-clock over the $47$ datasets. \kcm{}'s fit has a median of $0.33$ seconds and reaches at most $1.2$, below LUNAR ($3.5$ s median, worst $1018$), the autoencoder ($1.6$ s) and TCCM ($1.1$ s, worst $94$); its median inference is $0.06$ seconds, where the non-parametric DTE-NP pays a median of $0.30$ rising past $700$ on the largest datasets. The two methods tied with \kcm{} each carry a cost it does not: LUNAR in training, DTE-NP at inference. The reference cap is priced, not assumed: on the three largest ADBench datasets (\texttt{cover}, \texttt{fraud}, \texttt{census}), refitting at $N\in\{500,\dots,20{,}000\}$ training normals puts the uncapped fit on the quadratic branch ($265$ to $1030$ s at $N{=}20{,}000$; single core, median of three seeds) while the capped fit stays below $9$ s at every $N$. And the subsample does not buy the accuracy: at $N{=}20{,}000$ the cap moves mean AUROC by $+0.005$ (\texttt{cover}), $-0.001$ (\texttt{fraud}) and $-0.005$ (\texttt{census}), positive favouring the uncapped fit, and stays within $\pm0.01$ at every measured $N$.

\begin{table}[t]
  \centering
  \caption{Per-detector wall-clock over the $47$ ADBench datasets, one CPU core: median fit, median inference, worst-case total; cells shaded $\propto\log$ within each column. KDE and DTE-NP did not complete on $5$ and $1$ datasets (medians cover completed runs). All timings are author measurements.}
  \label{tab:kcm-speed}
  \footnotesize
  \setlength{\tabcolsep}{5pt}
  \begin{tabular}{@{}lrrr@{}}
  \toprule
  \textbf{Method} & \textbf{Fit (s)} & \textbf{Infer.\ (s)} & \textbf{Worst (s)} \\
  \midrule
  kNN & \cellcolor{orange!30}$0.021$ & \cellcolor{orange!28}$0.014$ & \cellcolor{orange!52}$280$ \\
  IForest & \cellcolor{orange!37}$0.062$ & \cellcolor{orange!23}$0.009$ & \cellcolor{orange!10}$2.60$ \\
  ECOD & \cellcolor{orange!43}$0.156$ & \cellcolor{orange!23}$0.009$ & \cellcolor{orange!34}$38.0$ \\
  DTE-NP & \cellcolor{orange!10}$0.001$ & \cellcolor{orange!64}$0.302$ & \cellcolor{orange!61}$771$ \\
  KDE & \cellcolor{orange!43}$0.152$ & \cellcolor{orange!60}$0.213$ & \cellcolor{orange!64}$1046$ \\
  \textbf{KCM} & \cellcolor{orange!48}$0.331$ & \cellcolor{orange!45}$0.060$ & \cellcolor{orange!34}$38.5$ \\
  TCCM & \cellcolor{orange!56}$1.09$ & \cellcolor{orange!10}$0.003$ & \cellcolor{orange!42}$94.4$ \\
  AE & \cellcolor{orange!59}$1.57$ & \cellcolor{orange!28}$0.014$ & \cellcolor{orange!36}$48.5$ \\
  LUNAR & \cellcolor{orange!64}$3.46$ & \cellcolor{orange!35}$0.025$ & \cellcolor{orange!64}$1018$ \\
  \bottomrule
  \end{tabular}
\end{table}

\section{Kernel-Anchored Regularization}
\label{sec:kar}

\kcm{} cannot collapse, at a price: it has no trainable parameters, so a dataset cannot improve it. The vanilla fixed-target detector is trainable, and Section~\ref{sec:collapse} showed that training is what destroys it. This section keeps the trainable neural map and removes the failure by constraining it to a fixed radius of the kernel-weighted average, a regularizer that is collapse-proof for the same structural reason \kcm{} is.

\subsection{Definition}
\label{sec:kar-def}

\kcm{} replaced the neural map $m_\theta$ of Definition~\ref{def:fixedtarget} by the kernel-weighted average $f_{\kcm}$ of Eq.~\eqref{eq:kcm}. Kernel-anchored regularization, \kar{}, keeps the neural map and adds a penalty that holds it close to $f_{\kcm}$.

Write the deviation of the prediction from the kernel-weighted average at a point $x$ as
\begin{equation}
  \delta_\theta(x)=\bigl\|m_\theta(x)-f_{\kcm}(x)\bigr\|.
  \label{eq:kar-dev}
\end{equation}
The \emph{kernel-anchored regularizer} $R_\rho$ penalizes this deviation:
\begin{equation}
  \min_\theta\ \E_x\bigl\|m_\theta(x)-T(x)\bigr\|^2+R_\rho(m_\theta),
  \qquad
  R_\rho(m_\theta)=\sum_x P_\rho\bigl(\delta_\theta(x)\bigr),
  \label{eq:kar-obj}
\end{equation}
with $P_\rho$ the deadzone penalty
\begin{equation}
  P_\rho(\delta)=0 \ \text{ for } \delta<\rho,
  \qquad
  P_\rho(\delta)=+\infty \ \text{ for } \delta\ge\rho.
  \label{eq:kar-deadzone}
\end{equation}
$R_\rho$ is a regularizer in the ordinary sense, a nondecreasing penalty on the size of the deviation field $\delta_\theta$; the deadzone $P_\rho$ leaves the prediction free within the anchor radius $\rho$ and forbids it past that radius. The one parameter is $\rho$, measured in the units of the data and directly comparable to the \kcm{} residual scale: at $\rho=0$ the prediction is pinned to $f_{\kcm}$ and \kar{} is \kcm{}; as $\rho\to\infty$ the penalty vanishes and \kar{} is the vanilla detector.

\paragraph{Constrained form.}
Because $P_\rho$ is infinite outside the anchor radius, $R_\rho$ is the indicator function of the set
\begin{equation}
  \mathcal{C}=\bigl\{\,m:\ \|m(x)-f_{\kcm}(x)\|<\rho \ \text{ for all } x\,\bigr\},
  \label{eq:kar-tube}
\end{equation}
the tube of radius $\rho$ around $f_{\kcm}$. The penalized objective is then identically the constrained problem $\min_\theta\E_x\|m_\theta-T\|^2$ subject to $m_\theta\in\mathcal{C}$; Proposition~\ref{prop:kar-equiv} records the equivalence.

\paragraph{The constraint is over the whole input space.}
The membership $\|m(x)-f_{\kcm}(x)\|<\rho$ in Eq.~\eqref{eq:kar-tube} is asked at every $x\in\R^D$, not only at the training points, and this carries the method: the collapse is an off-distribution event (Proposition~\ref{prop:collapse}), and a penalty summed over training inputs is blind there, exactly as the fixed-target loss is. $R_\rho$ is therefore a constraint on the function $m_\theta$ across its domain, and it is enforced as one.

\paragraph{Realization.}
The membership $m_\theta\in\mathcal{C}$ is enforced exactly, at every $x$, by writing the prediction as the kernel-weighted average plus a bounded correction:
\begin{equation}
  m_\theta(x)=f_{\kcm}(x)+g_\theta(x),
  \qquad
  g_\theta(x)=\rho\,\frac{u_\theta(x)}{1+\|u_\theta(x)\|},
  \label{eq:kar-arch}
\end{equation}
where $u_\theta$ is an unconstrained multilayer perceptron. The radial squash gives $\|g_\theta(x)\|<\rho$ for every $x$, so $m_\theta$ ranges over exactly the open tube $\mathcal{C}$, and the correction $g_\theta$ is trained under the fixed-target loss $\E_x\|m_\theta-T\|^2$. This is \kar{}: the regularizer is not a term added to the loss but the hypothesis class the loss is minimized over, which is what lets a constraint over all of $\R^D$ be imposed without an intractable optimization.

\subsection{Propositions}
\label{sec:kar-props}

\begin{proposition}[Penalty and constraint coincide]\label{prop:kar-equiv}
For any objective $L$,
$\inf_m\,[\,L(m)+R_\rho(m)\,]=\inf_{m\in\mathcal{C}}L(m)$, with the same
infimum.
\end{proposition}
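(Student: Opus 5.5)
The plan is to show that $R_\rho$ is exactly the convex-analytic indicator of the tube $\mathcal{C}$ of Eq.~\eqref{eq:kar-tube}, so that adding it to $L$ changes nothing inside $\mathcal{C}$ and excludes everything outside. Throughout, I read $\sum_x$ in Eq.~\eqref{eq:kar-obj} as ranging over all $x\in\R^D$, which is what the paragraph ``The constraint is over the whole input space'' requires. I also adopt the conventions $a+(+\infty)=+\infty$ for $a\in\R\cup\{+\infty\}$ and $\inf\emptyset=+\infty$. The objective $L$ may take any values in $(-\infty,+\infty]$; in particular it may be the fixed-target loss \eqref{eq:fixedloss}.

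\emph{Step 1: $R_\rho=\iota_{\mathcal{C}}$.} Take $m\in\mathcal{C}$. Then $\delta(x)=\|m(x)-f_{\kcm}(x)\|<\rho$ at every $x$, so every summand $P_\rho(\delta(x))$ is $0$ by Eq.~\eqref{eq:kar-deadzone}. The sum of nonnegative terms that are all zero is $0$, even over an uncountable index set (taken as the supremum of finite partial sums), so $R_\rho(m)=0$. Now take $m\notin\mathcal{C}$. Some $x_0$ has $\delta(x_0)\ge\rho$, so $P_\rho(\delta(x_0))=+\infty$. Since all summands are nonnegative, $R_\rho(m)=+\infty$.

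\emph{Step 2: pointwise identity.} Step 1 gives $L(m)+R_\rho(m)=L(m)$ for $m\in\mathcal{C}$ and $L(m)+R_\rho(m)=+\infty$ for $m\notin\mathcal{C}$.

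\emph{Step 3: infima.} Split $\inf_m[L+R_\rho]$ into the minimum of the infimum over $\mathcal{C}$ and the infimum over its complement. The first equals $\inf_{\mathcal{C}}L$ and the second is $+\infty$, so the minimum is $\inf_{\mathcal{C}}L$. If $\mathcal{C}$ is empty, both sides are $+\infty$. In fact $\mathcal{C}$ is never empty, because it contains $f_{\kcm}$ whenever $\rho>0$; this is consistent with the realization \eqref{eq:kar-arch}. By Step 2, the same argument shows that the two problems have the same minimizers, which is the sense of ``same infimum''.

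The only delicate point is Step 1: making the uncountable sum rigorous and fixing the $+\infty$ conventions. I would handle it with the supremum-of-finite-sums definition, which is well defined for nonnegative extended-real terms. If the sum were instead read as ranging only over training inputs, the identity would hold only with a training-point version of $\mathcal{C}$. I would therefore state the whole-space reading explicitly.
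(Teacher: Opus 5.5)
Your proof is correct and follows the paper's argument: the paper also identifies $R_\rho$ with the indicator of $\mathcal{C}$ (zero on it, $+\infty$ off it) and concludes that adding it leaves $L$ unchanged on $\mathcal{C}$ and rules out everything else, so the infima agree; your reading of the whole-space sum as a supremum of finite partial sums, and your $+\infty$ conventions, make explicit what the paper's one-line proof takes for granted. One small caveat concerns your side remark about minimizers, which the statement does not need: the two argmin sets coincide only when the common infimum is below $+\infty$, since e.g.\ at $\rho=0$ the strict inequality makes $\mathcal{C}$ empty, and then every $m$ trivially attains the value $+\infty$ of the penalized problem.
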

\begin{proof}
See Appendix~\ref{app:proofs}.
\end{proof}

\noindent Write $r_{\kcm}(x)=x-\hat{x}_h(x)$ for the \kcm{} residual, with $\|r_{\kcm}(x)\|=s_{\kcm}(x)$ (Eq.~\eqref{eq:kcmscore}).

\begin{proposition}[Score floor]\label{prop:kar-floor}
For the contraction target $T(x)=-x$, the \kar{} score satisfies, at every $x$,
\begin{equation}
  \begin{aligned}
  s(x)&=\bigl\|m_\theta(x)-T(x)\bigr\|=\bigl\|r_{\kcm}(x)+g_\theta(x)\bigr\|,\\
  &\quad\ \bigl|s(x)-s_{\kcm}(x)\bigr|<\rho.
  \end{aligned}
  \label{eq:kar-floor}
\end{equation}
By Proposition~\ref{prop:immune},
$s(x)>\operatorname{dist}(x,\operatorname{conv}(X))-\rho$, so $s$ cannot vanish
sufficiently far outside the convex hull of the training normals: the collapsed
solution $s\equiv0$ is outside the hypothesis class for every finite $\rho$.
\end{proposition}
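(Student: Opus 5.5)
The plan is to unfold the definitions and then apply the reverse triangle inequality. Substituting the realization \eqref{eq:kar-arch} into the residual of Definition~\ref{def:fixedtarget} with $T(x)=-x$ gives
\[
s(x)=\|f_{\kcm}(x)+g_\theta(x)+x\|.
\]
For the contraction target, Eq.~\eqref{eq:kcm} gives $f_{\kcm}(x)=-\hat{x}_h(x)$. Hence $f_{\kcm}(x)+x=x-\hat{x}_h(x)=r_{\kcm}(x)$, and so $s(x)=\|r_{\kcm}(x)+g_\theta(x)\|$, which is the first line of \eqref{eq:kar-floor}.

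For the second line, I would apply the reverse triangle inequality to $r_{\kcm}(x)$ and $r_{\kcm}(x)+g_\theta(x)$:
\[
\bigl|\,\|r_{\kcm}+g_\theta\|-\|r_{\kcm}\|\,\bigr|\le\|g_\theta(x)\|.
\]
Recall that $\|r_{\kcm}\|=s_{\kcm}$. The radial squash gives
\[
\|g_\theta(x)\|=\rho\,\frac{\|u_\theta(x)\|}{1+\|u_\theta(x)\|}<\rho,
\]
strictly, because $t/(1+t)<1$ for every $t\ge0$. Together these yield $|s(x)-s_{\kcm}(x)|<\rho$. The same bound holds for any $m\in\mathcal{C}$ as defined in \eqref{eq:kar-tube}, since only $\|m(x)-f_{\kcm}(x)\|<\rho$ is used. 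So the argument does not depend on the specific parametrization, which agrees with Proposition~\ref{prop:kar-equiv}.

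The floor then follows by chaining with Proposition~\ref{prop:immune}:
\[
s(x)>s_{\kcm}(x)-\rho\ge\operatorname{dist}(x,\operatorname{conv}(X))-\rho.
\]
The convex hull $\operatorname{conv}(X)$ of finitely many points is compact. Therefore the set $\{x:\operatorname{dist}(x,\operatorname{conv}(X))\ge\rho\}$ is nonempty, and indeed unbounded. On that set $s(x)>0$, so no member of the hypothesis class has $s\equiv0$, for any finite $\rho$.

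There is no real obstacle here. The only points needing care are keeping the inequality strict, which relies on the open tube and the strict squash bound, and noting that the far-region set is nonempty, which follows from the boundedness of the hull.
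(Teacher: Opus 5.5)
Your proposal is correct and matches the paper's proof essentially step for step: substitute $f_{\kcm}=-\hat{x}_h$ to get $m_\theta-T=r_{\kcm}+g_\theta$, apply the reverse triangle inequality with $\|g_\theta\|<\rho$, and chain with Proposition~\ref{prop:immune}. Your extra remarks make explicit what the paper leaves implicit: only membership in $\mathcal{C}$ is used, and compactness of $\operatorname{conv}(X)$ makes the region where $\operatorname{dist}(x,\operatorname{conv}(X))\ge\rho$ nonempty; like the paper, you also tacitly need $\rho>0$ for the strict inequality.
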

\begin{proof}
See Appendix~\ref{app:proofs}.
\end{proof}

\noindent The two rankings agree on every point pair whose \kcm{} scores differ by more than $2\rho$, so the peak-to-final AUROC gap is controlled by the pair mass below $2\rho$: it vanishes as $\rho\to0$ and grows with the radius, as the ablation below measures.

\begin{proposition}[One detector across backbones]\label{prop:kar-backbone}
Instantiate \kar{} on a fixed-target backbone with target $T_B$ by anchoring at
$A_B=T_B+r_{\kcm}$ and predicting $m_\theta=A_B+g_\theta$. Then the training loss
and the anomaly score,
\begin{equation}
  \begin{aligned}
  \E_x\bigl\|m_\theta(x)-T_B(x)\bigr\|^2
    &=\E_x\bigl\|r_{\kcm}(x)+g_\theta(x)\bigr\|^2,\\
  \bigl\|m_\theta(x)-T_B(x)\bigr\|
    &=\bigl\|r_{\kcm}(x)+g_\theta(x)\bigr\|,
  \end{aligned}
  \label{eq:kar-backbone}
\end{equation}
are both independent of $T_B$: \kar{} is the one detector of Proposition~\ref{prop:kar-floor} on any fixed-target backbone.
\end{proposition}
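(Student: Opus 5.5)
The identity is pure algebra, so the plan is to substitute the prescribed parametrization into the residual and simplify. For every $x$, write $m_\theta(x)=A_B(x)+g_\theta(x)=T_B(x)+r_{\kcm}(x)+g_\theta(x)$. Subtracting $T_B(x)$ gives the pointwise identity
\begin{equation*}
  m_\theta(x)-T_B(x)=r_{\kcm}(x)+g_\theta(x).
\end{equation*}
Taking norms yields the second line of Eq.~\eqref{eq:kar-backbone}. Squaring and taking $\E_x$ yields the first line. The expectation is over the backbone's training-input distribution $\mu$, and equality of the integrands pointwise gives equality of the integrals. Because $T_B$ enters only through its own cancellation, neither quantity depends on the backbone target.

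Next I would check that this instantiation is genuinely \kar{} on backbone $B$, so that the conclusion ``the one detector of Proposition~\ref{prop:kar-floor}'' is justified. The anchor $A_B$ must be the kernel-regression analog of $T_B$, playing the role of $f_{\kcm}$ in Eq.~\eqref{eq:kar-arch}. For the contraction target, $T(x)=-x$ and $f_{\kcm}(x)=-\hat{x}_h(x)=T(x)+r_{\kcm}(x)$, so $A_B$ reduces to the Section~\ref{sec:kar-def} anchor. For the identity target, the analog $\hat{x}_h(x)=x-r_{\kcm}(x)$ carries the opposite sign of $r_{\kcm}$. In that case I would remark that the map $g\mapsto -g$ preserves the squashed ball $\{\|g\|<\rho\}$ of Eq.~\eqref{eq:kar-arch}. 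The resulting detector is therefore the same up to reparametrizing $u_\theta\mapsto -u_\theta$, and the norms $\|r_{\kcm}\pm g_\theta\|$ range over the same set of functions.

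Finally, the optimization problem depends only on $g_\theta$ through $\E_x\|r_{\kcm}+g_\theta\|^2$, and the score is $\|r_{\kcm}+g_\theta\|$. Both coincide with the expressions in Proposition~\ref{prop:kar-floor}, so the floor $s(x)>\operatorname{dist}(x,\operatorname{conv}(X))-\rho$ transfers verbatim. One caveat: the expectation is over $\mu$, which differs across backbones (noised versus clean marginals). I would state the independence as being from $T_B$ for a fixed $\mu$, which is exactly what Eq.~\eqref{eq:kar-backbone} asserts.

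The only step needing care is the sign convention for the identity-type targets. The cancellation itself is immediate.
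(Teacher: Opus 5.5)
Your proposal is correct and is essentially the paper's proof: substitute $A_B=T_B+r_{\kcm}$, cancel $T_B$ pointwise, then take norms for the score and square-and-integrate for the loss. You also make the same caveat the paper does, that the right-hand sides still depend on the input distribution. Your second paragraph is not needed for the statement, since the anchor is defined as $T_B+r_{\kcm}$ rather than required to be the kernel analog of $T_B$, and that framing does not extend to every fixed-target backbone: for Deep SVDD's constant target $T\equiv c$ the Nadaraya--Watson analog is $c$ itself, so your sign-flip argument covers the autoencoder (matching the paper's appendix remark) but not that case.
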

\begin{proof}
See Appendix~\ref{app:proofs}.
\end{proof}

\noindent For contraction matching ($T=-x$) the anchor $A_B=-\hat{x}_h$ is the kernel-weighted average $f_{\kcm}$ itself; for the autoencoder ($T=+x$) and Deep SVDD ($T\equiv c$) it differs from $f_{\kcm}$ but yields $m_\theta-T_B=r_{\kcm}+g_\theta$ all the same (Appendix~\ref{app:backbone}).

\subsection{Experiments}
\label{sec:kar-exp}

\paragraph{Setup.}
We evaluate \kar{} on the six ADBench datasets where Section~\ref{sec:collapse} found a clear convergence collapse, namely \texttt{annthyroid}, \texttt{Pima}, \texttt{Ionosphere}, \texttt{WDBC}, \texttt{cardio}, and \texttt{vertebral}. Splits are those of Section~\ref{sec:kcm-results}; every number is from our own runs. The collapse metric is the peak-to-final AUROC gap, $\mathrm{gap}=\mathrm{peak}-\mathrm{final}$. The correction $g_\theta$ is a three-layer perceptron trained for $100$ epochs over three seeds, except where a longer horizon is stated, and the anchor radius is swept over $\rho\in\{0,0.5,1,2,5,100\}$. The kernel anchor uses the same bandwidth grid as Section~\ref{sec:kcm} with the untrimmed leave-one-out selector ($\beta=0$).

\paragraph{The anchor radius interpolates KCM and the vanilla detector.}
Table~\ref{tab:kar-rho} sweeps $\rho$. At $\rho=0$ the correction is identically zero and \kar{} reproduces \kcm{} with $\mathrm{gap}=0$ on all six datasets. As $\rho$ grows the bounded correction is free to move: at its best radius \kar{} improves on the kernel-weighted average it anchors to on three of the six datasets, by $+0.019$ on \texttt{annthyroid} and $+0.028$ on \texttt{Pima}, and ties it on two more that are already near saturation; the gains do not extend to the full benchmark (below). The collapse stays bounded; at $\rho=0.5$ the gap is at most $+0.004$. Removing the regularizer, $\rho=100$ in the table, brings the collapse back: the final AUROC falls below \kcm{} on every competitive dataset, by $0.015$ to $0.026$, and the gap widens to as much as $+0.044$, growing with $\rho$ throughout, consistent with the score bound of Proposition~\ref{prop:kar-floor}.

\begin{table}[t]
  \centering
  \caption{\kar{} anchor-radius ablation, six collapse-prone datasets (mean of three seeds, $100$ epochs). $\rho^\star$, the smallest positive radius approaching the best final AUROC, is an \emph{oracle upper bound}. Last columns: regularizer removed ($\rho=100$). $\mathrm{gap}=\mathrm{peak}-\mathrm{final}$.}
  \label{tab:kar-rho}
  \footnotesize
  \setlength{\tabcolsep}{4.5pt}
  \begin{tabular}{@{}lcccccc@{}}
  \toprule
   & & \multicolumn{3}{c}{\kar{} (anchored)} & \multicolumn{2}{c}{reg.\ removed} \\
  \cmidrule(lr){3-5}\cmidrule(lr){6-7}
  \textbf{Dataset} & \textbf{\kcm{}} & \textbf{$\rho^\star$} & \textbf{AUROC} & \textbf{gap} & \textbf{AUROC} & \textbf{gap} \\
  \midrule
  \texttt{annthyroid} & $0.929$ & $0.5$ & $\mathbf{0.948}$ & $+0.000$ & $0.905$ & $+0.044$ \\
  \texttt{Pima}       & $0.701$ & $0.5$ & $\mathbf{0.730}$ & $+0.004$ & $0.676$ & $+0.042$ \\
  \texttt{Ionosphere} & $0.974$ & $1.0$ & $\mathbf{0.976}$ & $+0.001$ & $0.959$ & $+0.003$ \\
  \texttt{WDBC}       & $0.993$ & $1.0$ & $0.993$          & $+0.001$ & $0.976$ & $+0.017$ \\
  \texttt{cardio}     & $0.963$ & $0.5$ & $0.960$          & $+0.002$ & $0.938$ & $+0.014$ \\
  \texttt{vertebral}  & $0.550$ & $0.0$ & $0.550$          & $+0.000$ & $0.562$ & $+0.046$ \\
  \bottomrule
  \end{tabular}
\end{table}

\paragraph{A fixed radius avoids per-dataset label tuning.}
$\rho^\star$ in Table~\ref{tab:kar-rho} is an oracle: it relies on information unavailable at deployment, an upper bound rather than a rule a practitioner can run. The deployable rule is a constant: the radius is expressed in \kcm{}-residual units on $z$-scored features, and $\rho=0.5$ is used unchanged on every dataset in this paper; the $47$-dataset sweep prices the choice. Against the oracle radius, the fixed default costs a mean of $0.0049$ AUROC and a median of $0.0007$ over the $47$ datasets. On five of the six collapse-prone datasets of Table~\ref{tab:kar-rho} it is within $0.001$ of the oracle, and on four it attains it. Eight of the $47$ lose more than $0.005$: the two largest losses ($0.067$, $0.037$) are on datasets where the kernel anchor itself is at or near chance ($0.62$ and $0.48$ AUROC), where \kar{} has nothing to sharpen; the remaining six lose $0.007$ to $0.028$.

\paragraph{The full benchmark.}
We repeat the $\rho$-ablation over all $47$ ADBench datasets, $846$ runs in total. Figure~\ref{fig:kar-adbench} sorts the per-dataset gap. At $\rho=0.5$ \kar{} holds the gap below $0.02$ on $42$ of the $47$ datasets; with the regularizer removed the gap clears $0.02$ on $14$. \kar{} also stays on its anchor: the mean final AUROC at $\rho=0.5$ is within $0.001$ of the $\rho=0$ smoother and within $0.02$ of it on $43$ datasets, while the regularizer-removed limit falls more than $0.01$ below the anchor on $17$. 

\begin{figure}[t]
  \centering
  \includegraphics[width=0.76\columnwidth]{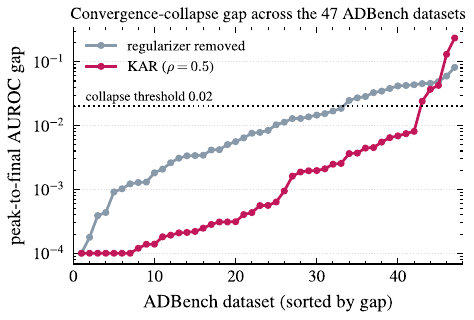}
  \caption{Convergence-collapse gap on the $47$ ADBench datasets, \kar{} at $\rho=0.5$ vs the regularizer removed ($\rho=100$); each series sorted independently, dotted line at the $0.02$ threshold. Mean of three seeds, $846$ runs.}
  \Description{Per-dataset convergence-collapse gap across the 47 ADBench datasets. Two lines, each sorted ascending: KAR at radius rho equals 0.5 stays at or below a 0.02 collapse threshold on 42 of 47 datasets, while the regularizer-removed limit exceeds the threshold on 14 datasets.}
  \label{fig:kar-adbench}
\end{figure}

\paragraph{The three backbones are one detector.}
Instantiated through TCCM, an autoencoder, and Deep SVDD, \kar{} gives the same detector: the three final AUROCs agree to a cross-backbone spread of at most $2.2\times10^{-2}$ on the five competitive datasets, with a median of $2.4\times10^{-3}$, as Proposition~\ref{prop:kar-backbone} predicts.

\paragraph{Prolonged training.}
Figure~\ref{fig:kar-curves} runs the training five times longer, to $500$ epochs, on four of the six datasets. The anchored detector, $\rho=0.5$, holds its AUROC flat across the whole horizon: the gap is at most $+0.0022$ on all four, and the final AUROC stays above both \kcm{} and the unconstrained limit. With the regularizer removed, $\rho=100$, the same backbone drifts down as training continues, and the gap opens to between $+0.018$ and $+0.069$.

\begin{figure*}[t]
  \centering
  \includegraphics[width=\textwidth]{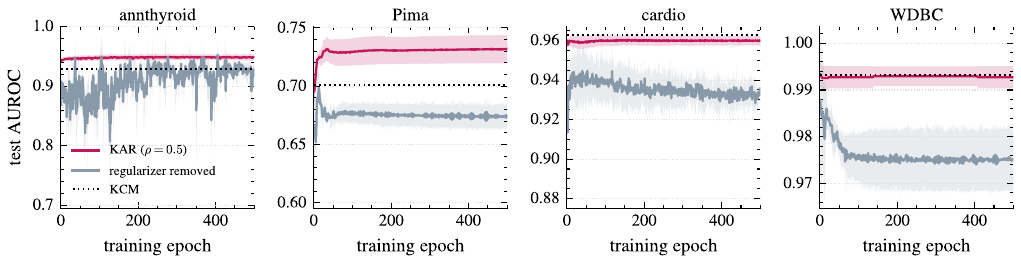}
  \caption{Test AUROC over $500$ epochs (4 of 6 datasets). \kar{} at $\rho=0.5$ (red) stays flat; with the regularizer removed ($\rho=100$, grey) the backbone drifts. Dotted: \kcm{}. Mean of three seeds, shaded $\pm1$ s.d.}
  \Description{Four side-by-side line plots, one per dataset (annthyroid, Pima, cardio, WDBC), showing test AUROC over 500 training epochs. The red curve (KAR with rho equals 0.5) stays high and flat; the grey curve (regularizer removed, rho equals 100) drifts downward as training continues; a black dotted line marks the KCM reference. Shaded bands are one standard deviation over three seeds.}
  \label{fig:kar-curves}
\end{figure*}

\section{Discussion}
\label{sec:discussion}

\kcm{} was built to avoid convergence collapse, and Section~\ref{sec:kcm-results} shows the construction costs nothing in accuracy: a closed-form local smoother is level with the strongest label-free detectors on the benchmark. The mean-AUROC lead over the next methods is a tie within run-to-run noise, so the substance is the asymmetry behind it: the detectors that match \kcm{} are trained neural networks, and \kcm{} matches them with no gradient training, no GPU, and no labels. The accuracy is stable across the benchmark, with mean AUROC about $0.88$ on the low-dimensional datasets ($D\le10$), $0.87$ in the middle range, and $0.85$ on the high-dimensional ones ($D>50$).

The $O(N^2D)$ fit is the main limitation; Section~\ref{sec:kcm-results} (R3) prices it: the $N_0=2000$ cap makes the fit constant in $N$ at a measured cost within $\pm0.01$ AUROC.

\paragraph{Limitations.}
Two limitations bound the \kar{} claim, two more the locality argument itself. \kar{} is anchored to \kcm{}, so where the kernel anchor is near chance it has nothing to sharpen; \texttt{vertebral} is that case, and \kar{} does not help there. Across the full benchmark \kar{} matches the anchor rather than improving on it; the benchmark-wide contribution is the removal of the collapse at no accuracy cost. The locality argument is strongest for \emph{extrapolative} anomalies, points that leave the training data; clustered, conditional, or in-hull anomalies are not separately evaluated here. And necessity is not shown: Proposition~\ref{prop:kar-floor} needs only an off-hull floor on the reference score, so other locality constraints could serve as the anchor; whether they do equally well is untested.

\section{Conclusion}
\label{sec:conclusion}

We identified a convergence collapse in fixed-target anomaly detectors: training the neural map toward its target erases the residual signal off the data, so better optimization makes the detector worse. The cause is a missing locality constraint, and we supplied it twice. \kcm{}, the kernel-regression analog of the fixed-target family, is a closed-form, training-free detector that cannot collapse and matches the strongest trained detectors on ADBench. \kar{} carries the same constraint into a trainable detector, regularizing a neural map to a tube around the kernel-weighted average; it is collapse-proof by construction, holds its accuracy across all $47$ ADBench datasets and three backbones under prolonged training, and improves on the kernel anchor where the data are collapse-prone. Both rest on one principle: an anomaly detector must not be free to extrapolate its on-data behaviour past the data, and anchoring the prediction to a local kernel estimate enforces that, in closed form or as a regularizer.

Accuracy under prolonged training, not at an early peak, is the right metric: the same backbone can swing by $0.1$ AUROC with the stopping epoch, and a paper that quotes only the peak is reporting a regularizer it has not named. And $\kcm{}$ raises the bar a trained detector must clear: where a closed-form kernel smoother matches the best neural detector, the deep methods are paying their training cost for no accuracy gain.

\begin{acks}
This work was performed using HPC resources from the ``M\'e\-so\-cen\-tre'' computing center of
CentraleSup\'elec and \'Ecole Normale Sup\'erieure Paris-Saclay supported by CNRS and R\'egion
\^Ile-de-France (\url{http://mesocentre.centralesupelec.fr/}). This research benefited from the
support of the Chair ``Artificial Intelligence applied to the banking domain and to global health''
led by CentraleSup\'elec and sponsored by Lusis, and by a CIFRE fellowship. This project was
provided with computing AI and storage resources by GENCI at IDRIS thanks to the grant
2026-AD011011828R6 on the supercomputer Jean Zay's H100 partition.
\end{acks}

\appendix

\section{Proofs}
\label{app:proofs}

\begin{proof}[Proof of Theorem~\ref{thm:lazy}]
In the infinite-width limit the tangent kernel is constant through training~\cite{jacotNeuralTangentKernel2020}, and the trained network evolves as a linear model under gradient descent~\cite{leeWideNeuralNetworks2019}. On the training inputs the squared-error objective makes the function-space gradient flow the linear system $\mathrm{d}m_\tau(X)/\mathrm{d}\tau=-\eta\,\Theta(X,X)\,(m_\tau(X)-\mathbf{T})$, whose solution from a zero-output initialization is $m_\tau(X)=(I-e^{-\eta\tau\,\Theta(X,X)})\,\mathbf{T}$. For vector targets the linearized flow is coordinate-wise under the standard NTK parameterization, so the $D$-dimensional system decouples into $D$ scalar copies. The constant kernel propagates this flow to a query as $m_\tau(q)=\Theta(q,X)\,\Theta(X,X)^{-1}m_\tau(X)$, which is Eq.~\eqref{eq:lazy}; sending $\tau\to\infty$ drives the matrix exponential $e^{-\eta\tau\Theta(X,X)}$ to zero.
\end{proof}

\begin{proof}[Proof of Proposition~\ref{prop:kar-equiv}]
$R_\rho$ is the indicator of $\mathcal{C}$, zero on it and $+\infty$ off it, so adding it changes neither the value nor the infima of $L$ on $\mathcal{C}$.
\end{proof}

\begin{proof}[Proof of Proposition~\ref{prop:kar-floor}]
With $T=-x$, $f_{\kcm}(x)=-\hat{x}_h(x)$ by Eq.~\eqref{eq:kcm}, so $m_\theta(x)-T(x)=-\hat{x}_h(x)+g_\theta(x)+x=r_{\kcm}(x)+g_\theta(x)$; taking norms gives the first equality of Eq.~\eqref{eq:kar-floor}. The reverse triangle inequality with $\|g_\theta(x)\|<\rho$ then gives $|\,\|r_{\kcm}(x)+g_\theta(x)\|-\|r_{\kcm}(x)\|\,|<\rho$, the second. Proposition~\ref{prop:immune} bounds $s_{\kcm}(x)\ge\operatorname{dist}(x,\operatorname{conv}(X))$, so $s(x)>s_{\kcm}(x)-\rho\ge\operatorname{dist}(x,\operatorname{conv}(X))-\rho$, unbounded as $x$ leaves the hull, so $s\equiv0$ is excluded for every finite $\rho$.
\end{proof}

\begin{proof}[Proof of Proposition~\ref{prop:kar-backbone}]
Substituting $A_B=T_B+r_{\kcm}$ into $m_\theta=A_B+g_\theta$ gives $m_\theta-T_B=(T_B+r_{\kcm})+g_\theta-T_B=r_{\kcm}+g_\theta$; taking the norm directly gives the second equality of Eq.~\eqref{eq:kar-backbone}, and squaring then taking $\E_x$ gives the first. Both right-hand sides depend only on $g_\theta$, $r_{\kcm}$ and the input distribution, not on $T_B$.
\end{proof}

\section{Backbone independence}
\label{app:backbone}

Table~\ref{tab:kar-backbone} gives the per-dataset breakdown of the cross-backbone result. TCCM and Deep SVDD share the same loss and score in exact arithmetic (Proposition~\ref{prop:kar-backbone}); the AE route is the sign reparameterization $g_\theta\!\to\!-g_\theta$. \texttt{vertebral} is the Scope case of \S\ref{sec:kar-exp}.

\begin{table}[h]
  \centering
  \caption{\kar{} through three fixed-target backbones ($\rho=1$, mean of three seeds, $100$ epochs); \emph{spread} = maximum, over seeds, of the per-seed pairwise gap of the three final AUROCs.}
  \label{tab:kar-backbone}
  \Description{Table with one row per of six datasets and four numeric columns: AUROC via TCCM, via AE, via Deep SVDD, and the cross-backbone spread. On five of the six datasets the three AUROCs differ by less than 2.2e-2; on vertebral the spread reaches 6.2e-2 because the kernel anchor is near random.}
  \scriptsize
  \setlength{\tabcolsep}{5pt}
  \begin{tabular}{@{}lcccc@{}}
  \toprule
  \textbf{Dataset} & \textbf{via TCCM} & \textbf{via AE} & \textbf{via Deep SVDD} & \textbf{spread} \\
  \midrule
  \texttt{annthyroid} & $0.944$ & $0.944$ & $0.945$ & $1.4\!\times\!10^{-3}$ \\
  \texttt{Pima}       & $0.728$ & $0.740$ & $0.728$ & $2.2\!\times\!10^{-2}$ \\
  \texttt{Ionosphere} & $0.976$ & $0.975$ & $0.976$ & $2.4\!\times\!10^{-3}$ \\
  \texttt{WDBC}       & $0.993$ & $0.993$ & $0.993$ & $5.6\!\times\!10^{-4}$ \\
  \texttt{cardio}     & $0.958$ & $0.955$ & $0.958$ & $5.5\!\times\!10^{-3}$ \\
  \texttt{vertebral}  & $0.455$ & $0.472$ & $0.455$ & $6.2\!\times\!10^{-2}$ \\
  \bottomrule
  \end{tabular}
\end{table}

\section{Reproduction}
\label{app:repro}

All experiments use the $47$ ADBench datasets~\cite{hanADBenchAnomalyDetection2022} under the standardized semi-supervised splits ($50\%$ normals for training), with $z$-scored features; the \kcm{} fit and the \kar{} setup are exactly those of Sections~\ref{sec:kcm} and~\ref{sec:kar-exp}, with $u_\theta$ a three-layer width-$256$ ReLU MLP trained with Adam ($10^{-3}$, batch $256$). Full configurations, run logs, and the per-run results behind every table and figure: \url{https://github.com/jose-melo/kernel-contraction-matching}.

\section*{Disclosure of Generative AI Use}
Large language model assistants (Anthropic Claude) were used during the
preparation of this work for grammar checking, writing assistance, and
figure-generation code. All theoretical statements, proofs, experimental
designs and reported results were produced and verified by the authors, who
take full responsibility for the content of this paper. No generative tool is
listed as an author.

\balance
\bibliographystyle{ACM-Reference-Format}
\bibliography{references}

\end{document}